\def\x{{\mathbf x}}
\def\R{\mathbb{R}}
\def\D{\mathbf{D}}
\def\X{\mathbf{X}}
\def\x{\mathbf{x}}
\def\A{\mathbf{A}}
\def\d{\mathbf{d}}
\def\a{\mathbf{a}}
\newtheorem{theorem}{Theorem}
\newtheorem{lemma}[theorem]{Lemma}
\newenvironment{proof}[1][Proof]{\begin{trivlist}
\item[\hskip \labelsep {\bfseries #1}]}{\end{trivlist}}
\NewDocumentCommand{\ceil}{s O{} m}{  \IfBooleanTF{#1}     {\left\lceil#3\right\rceil}     {#2\lceil#3#2\rceil} }
\title{Concave Losses for Robust Dictionary Learning }
\begin{document}
\maketitle
\begin{abstract}
  Traditional dictionary learning methods are based on quadratic
  convex loss function and thus are sensitive to outliers.  In this
  paper, we propose a generic framework for robust dictionary learning
  based on concave losses. We provide results on composition of concave
  functions, notably regarding supergradient computations, that are
key for developing generic dictionary learning algorithms applicable to smooth and non-smooth
losses. In order to improve identification of outliers, we introduce  an 
initialization heuristic based on
  undercomplete dictionary learning. Experimental results using synthetic and real data   demonstrate that our method is able to better detect
outliers, is capable of generating better
  dictionaries, outperforming state-of-the-art methods such as K-SVD
  and LC-KSVD.
\end{abstract}
\begin{keywords}
Robust dictionary learning, outlier detection, concave loss function.
\end{keywords}
\section{Introduction}
\label{sec:introduction}
Dictionary Learning is 
an important and widely used tool in Signal
Processing and Computer Vision. Its versatility 
is well acknowledged and 
it can be applied for denoising or for representation learning
prior to classification  \cite{aharon2006rm,Mairal:2009-online-dl}. 
The method consists in learning a
set of overcomplete elements (or atoms) 
which are 
useful for describing examples at hand. 
In this context, each example is represented as a potentially sparse linear span of the atoms. 
Formally, given a data matrix composed of $n$ elements of dimension $d$, $\X \in \R^{d \times n}$ and each column being an example $\x_i$,
 the dictionary learning problem is given by:
\begin{equation}\label{eq:dl}
\min_{\D  \in \R^{d \times k} , \A \R^{k \times n}} \frac{1}{2} \sum_{i=1}^n \|\x_i - \D \a_i\|_2^2 + \Omega_D(\D) + \Omega_A(\A)
\end{equation} 
where $\Omega_D$ and $\Omega_A$ represent some constraints and/or
penalties on the dictionary set $\D$
and the matrix coefficient $\A $,
each column being a linear combination coefficients $\a_i$
so that $ \x_i \approx \D \a_i$.
 Typical
regularizers are sparsity-inducing penalty on $\A$, or unit-norm
constraint on each dictionary element although a wide variety of
penalties can be useful \cite{Tibshirani:1996-lasso,
  Bach:2012-regularizers, rakotomamonjy2013applying}.

As depicted by the mathematical formulation of the problem, the learned dictionary $\D$  depends on training examples $\{\x_i\}_{i=1}^n$. However, because of the quadratic loss function in the data fitting
term, $\D$ is in addition, very sensitive to outlier examples. Our goal here is to address the robustness of the 
approach to outliers.  For this purpose, we  consider
loss functions that downweight the importance of outliers in
$\X$ making the learned dictionary less sensitive to them.

Typical approaches in the literature, that aim at mitigating influence of outliers, use Frobenius norm or component-wise
$\ell_1$ norm as data-fitting term instead of the squared-Frobenius one
\cite{nie2010efficient,wang2016fast}. 
Some works propose  loss functions such as the $\ell_q$ function, with 
$q \leq 1$ function or the capped  function $g(u) = \min(u,\epsilon)$,
for $u > 0$ \cite{wang2013semi,jiang2015robust}. 
Due to these non-smooth and non-convex loss function, the resulting dictionary
learning problem is more difficult to solve than the original one
given in Equation (\ref{eq:dl}). As such, authors have developed
algorithms based on a iterative reweighted least-square approaches
tailored to the loss function $\ell_q$ or $\min(u,\epsilon)$
\cite{wang2013semi,jiang2015robust}.

Our contribution in this paper is: (i) to introduce a generic framework for robust dictionary learning by considering as loss function the composition of the Frobenius norm and some concave loss functions (our framework encompasses previously
proposed methods while enlarging the set  of applicable loss functions); (ii) to propose a generic majorization-minimization algorithm applicable to concave, smooth or non-smooth  loss functions. Furthermore, 
because the resulting learning problem is
non-convex, its solution is sensitive to initial conditions, hence we  propose a novel heuristic for dictionary initialization that helps in detecting
outliers more efficiently during the learning process.

\section{Concave Robust Dictionary Learning}
\label{sec:rdl}

\subsection{Framework and Algorithm}
\label{ssec:our_robust_dl}

In order to robustify the dictionary learning process against outliers,
we need a learning problem that puts less emphasis on examples
that are not ``correctly'' approximated by the learned dictionary.
Hence, we propose the following generic learning problem:
\begin{equation}
  \label{eq:rdl}
  \min_{\D,\A}  \frac{1}{2} \sum_i F(\| \x_i - \D \a_i\|_2^2) + \Omega_D(\D) + \Omega_A(\A). 
\end{equation}
where $F(\bullet)$ is a function over $\R_{>0}$.
  Note that in
the sequel, we will not focus on the penalty and constraints over the
dictionary elements and coefficients $\A$. Hence, we  consider
them as the classical unit-norm constraint over $\d_j$ and the
$\ell_1$ sparsity-inducing penalty over $\{\a_i\}$.

Concavity of $F$ is crucial for robustness as it helps
in down-weighting influence of large  $\|\x_i - \D\a_i\|_2$.
 For instance, if we
set $F(\bullet)= \sqrt{\bullet}$, the above problem is similar to the convex robust
dictionary learning proposed by Wang et al. \cite{wang2016fast}. In order to provide 
better robustness, our goal is to introduce a generic
form of $F$ that leads to a concave loss with respect to $\|\x_i - \D\a_i\|_2$. instead of a linear, yet concave 
one as in \cite{wang2016fast}.

In this work, we emphasize
robustness by considering $F$ as the composition of two concave
functions $F(\bullet) = g(\bullet) \circ \sqrt{\bullet} $, with $g$ a non-decreasing
concave function over $\R_{>0}$, such as those  used for sparsity-inducing
penalties. Typically, $g(\bullet)$ can be the $q-$power, $q \leq 1$ function
$u^q$, the log function $\log(\epsilon + {u})$, the SCAD function
\cite{Fan:2001-scad}, or the capped-$\ell_1$ function
$\min(u, \epsilon)$, or the MCP function \cite{zhang2010nearly}. A key property
on $F$ is that concavity is preserved by the composition
of some specific concave functions as proved by the following lemma
which proof is omitted for space reasons.
\begin{lemma} Let $g$ be a non-decreasing concave function on 
$\R_{>0}$ and $h$ be a concave function on a domain $\Omega$ to $\R_{>0}$,
then $ g \circ h$ is concave. Furthermore, if $g$ is an increasing function
then $ g \circ h$ is strictly concave. 
\end{lemma}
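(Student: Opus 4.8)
The plan is to prove concavity directly from the defining inequality, by chaining together three facts in order: the concavity of $h$, the monotonicity of $g$, and the concavity of $g$. Fix $x,y\in\Omega$ and $\lambda\in[0,1]$, and set $z=\lambda x+(1-\lambda)y$. First I would use concavity of $h$ to obtain $h(z)\ge \lambda h(x)+(1-\lambda)h(y)$. Since $\R_{>0}$ is convex and $h$ takes its values there, the midpoint $\lambda h(x)+(1-\lambda)h(y)$ lies in $\R_{>0}$, so $g$ may legitimately be evaluated at it; this domain remark is small but necessary for the next step to make sense.

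Next, because $g$ is non-decreasing, applying it to both sides of the previous inequality preserves the direction: $g(h(z))\ge g\bigl(\lambda h(x)+(1-\lambda)h(y)\bigr)$. Then concavity of $g$ gives $g\bigl(\lambda h(x)+(1-\lambda)h(y)\bigr)\ge \lambda\, g(h(x))+(1-\lambda)\, g(h(y))$. Combining the two yields $(g\circ h)(z)\ge \lambda\,(g\circ h)(x)+(1-\lambda)\,(g\circ h)(y)$, which is exactly concavity of $g\circ h$.

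For the strict part I would track where each inequality can be upgraded to a strict one when $x\neq y$ and $\lambda\in(0,1)$. The cleanest route uses strict concavity of $h$ (which holds for the relevant choice $h=\sqrt{\bullet}$): then the first step is strict, $h(z)>\lambda h(x)+(1-\lambda)h(y)$, and if $g$ is strictly increasing, feeding a strictly larger argument into $g$ keeps the inequality strict, while the concavity step for $g$ is only needed weakly. Chaining then delivers a strict inequality and hence strict concavity of $g\circ h$.

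The main obstacle, and the point I would state carefully, is that strictness is genuinely driven by strict concavity of $h$ together with strict monotonicity of $g$, not by monotonicity of $g$ alone: if $h$ were affine along some segment $[x,y]$, then $h(z)$ would equal the convex combination and the first inequality would collapse to an equality, so a merely increasing (concave but possibly flat-rate) $g$ would give equality throughout and $g\circ h$ would fail to be strictly concave. I would therefore present the strict statement under the reading that $h$ is strictly concave, which is precisely the square-root composition used throughout the paper, and emphasize that the role of the increasing hypothesis on $g$ is to transfer the strict concavity of $h$ through the composition intact.
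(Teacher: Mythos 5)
Your argument is correct, and it is the canonical proof of this composition rule; the paper explicitly omits its own proof of this lemma ``for space reasons,'' so there is nothing to compare against, but the three-step chain you use (concavity of $h$, then monotonicity of $g$, then concavity of $g$, with the brief remark that $\lambda h(x)+(1-\lambda)h(y)$ lies in the domain of $g$ because $\R_{>0}$ is convex) is surely what the authors intended. Your handling of the ``furthermore'' clause is also the right call, and is worth flagging as a genuine defect in the lemma as stated rather than in your proof: with $h$ only assumed concave, the claim that $g$ increasing forces $g\circ h$ to be strictly concave is false --- take $g(u)=u$ and $h(x)=x$ on $\R_{>0}$, so that $g\circ h$ is affine. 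Strictness has to be injected either by strict concavity of $h$ (your reading, which is the relevant one since the paper only ever uses $h=\sqrt{\bullet}$) or by strict concavity of $g$ together with $h$ being non-constant along the segment; monotonicity of $g$ alone merely transports whatever strictness $h$ provides. So the proposal is complete and, if anything, slightly more careful than the statement it proves.
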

In our framework, $h$ is the square-root function with
$\Omega = \R_{>0}$. 
In addition, functions $g$, such as those given above, are either a concave or strictly concave functions and are all non-decreasing, hence $F = g \circ h$ is concave. Owing to concavity,  for any $u_0$ and $u$ in
$\R_{>0}$,
$$
F(u) \leq F(u_0) + F'(u_0) (u - u_0)
$$
where $F'(u_0)$ is an element of the superdifferential of $F$ at $u_0$. As $F$ is concave, the superdifferential is always non-empty and if $F$ is smooth at $u_0$, then $F'(u_0)$ is simply the gradient of $F$ at $u_0$. 
However, since $F$ is a composition of functions, in a non-smooth case, computing superdifferential is difficult unless the inner function is a linear function \cite{rockafellar2015convex}. Next lemma provides a key result showing that a  supergradient of $g \circ \sqrt{\bullet}$
can be simply computed using chain rule because $\sqrt{\bullet}$ is a
bijective function on $\R_{>0}$ to $\R_{>0}$ and $g$ is non-decreasing.  

\begin{lemma} Let $g$ a non-decreasing concave function on 
$\R_{>0}$ and $h$ a bijective differentiable concave function on a domain $\R_{>0}$ to $\R_{>0}$,
 then  if $g_1$ is a supergradient of
$g$ at $z$ then $g_1 \cdot h'(s) $  is a supergradient of $g \circ h $ at a point $s$ so
that $z = h(s)$.
\end{lemma}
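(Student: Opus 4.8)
The plan is to verify the supergradient inequality directly from its definition. Recall that a scalar $p$ is a supergradient of a concave function $F$ at $s$ precisely when $F(t) \le F(s) + p(t-s)$ for every $t$ in the domain. With $F = g \circ h$ and the candidate supergradient $p = g_1 \cdot h'(s)$, the task is therefore to establish $g(h(t)) \le g(h(s)) + g_1 h'(s)(t-s)$ for all $t \in \R_{>0}$.

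First I would chain two first-order inequalities, one coming from $g$ and one from $h$. Since $g_1$ is a supergradient of the concave $g$ at $z = h(s)$ --- a point of $\R_{>0}$ because $h$ maps into $\R_{>0}$ --- applying the defining inequality for $g$ with argument $w = h(t)$ gives $g(h(t)) \le g(h(s)) + g_1\,(h(t) - h(s))$. Separately, concavity of the differentiable $h$ yields the tangent-line bound $h(t) \le h(s) + h'(s)(t-s)$, that is, $h(t) - h(s) \le h'(s)(t-s)$. At this stage the two bounds are in place and it remains only to combine them.

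The step that makes the combination go through with the correct direction is the sign of $g_1$, and this is exactly where the monotonicity hypothesis on $g$ enters. Because $g$ is non-decreasing, any supergradient satisfies $g_1 \ge 0$: taking $w > z$ in the supergradient inequality for $g$ gives $g_1 \ge (g(w)-g(z))/(w-z) \ge 0$, the last step using that $g(w) \ge g(z)$. With $g_1 \ge 0$ secured, multiplying $h(t)-h(s) \le h'(s)(t-s)$ by $g_1$ preserves the inequality, and substituting into the bound for $g$ gives $g(h(t)) \le g(h(s)) + g_1 h'(s)(t-s)$, which is precisely the supergradient inequality for $g \circ h$ at $s$ with supergradient $g_1 h'(s)$.

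I expect the only subtlety --- more bookkeeping than obstacle --- to concern domains and the role of bijectivity. Bijectivity of $h$ onto $\R_{>0}$ guarantees that $z = h(s)$ lands in the domain of $g$, so that the phrase \emph{a supergradient of $g$ at $z$} is meaningful, and it makes the correspondence $s \leftrightarrow z$ well defined as required by the chain-rule formulation. The genuinely load-bearing observation is that non-decreasingness of $g$, equivalently $g_1 \ge 0$, is exactly what is needed to propagate the concavity bound on $h$ through the outer function without flipping the inequality; everything else is a direct substitution.
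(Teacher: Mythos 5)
Your proof is correct and follows essentially the same route as the paper's: apply the supergradient inequality of $g$ at $z=h(s)$ to $h(t)$, bound $h(t)-h(s)$ by the tangent line of the concave $h$, and use $g_1\ge 0$ (from monotonicity of $g$) to combine the two without flipping the inequality. The only difference is that you spell out the one-line justification of $g_1\ge 0$ and the role of bijectivity, which the paper leaves implicit.
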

\begin{proof} As $g_1 \in \partial g(z)$, we have $\forall y, g(y) \leq
g(z) + g_1 \cdot ( y -z)$.  Owing to bijectivity of $h$, define $t$ and $s$ so that $y
= h(t)$  and $z=h(s)$. In addition, concavity of $h$ gives $h(t) - h(s) \leq
h'(s) (t-s)$ and because $g$ is non-decreasing, $g_1 \geq 0$. 
Combining everything, we have $g_1 \cdot ( y -z) =
g_1 \cdot (h(t) - h(s)) \leq g_1 h'(s) (t-s) $. Thus
 $\forall t, g(h(t)) \leq g(h(s)) +  g_1 h'(s) (t-s)$ which concludes
the proof since $g_1$ is a supergradient of $g$ at $h(s)$.
\end{proof}

\begin{algorithm}[t]
\footnotesize \caption{The proposed Robust DL method}
\label{alg:proposed_robust_dl}
\begin{algorithmic}[1] 	\REQUIRE Data matrix $\mathbf{X} \in \mathbb{R}^{d \times n}$, dictionary size $k$, $\lambda$, $\epsilon$, $M$.
  		\IF{($k > d$) and (\textit{use undercomplete initialization})}
    	\STATE Initialize $\D$ and $s$ with Algorithm~\ref{alg:undercomplete_initialization}
    \ELSE
    	\STATE random initialization of $\D$, $\A$ 
        \STATE $s_j = 1$ for $j=1, \ldots, n$
    \ENDIF
    \FOR{$i=1$ to $M$}
    	\REPEAT
        	\STATE Update $\D$ by solving Equation \ref{eq:dicoupdate}
            \FOR{$j=1$ to $n$}
           		\STATE $\a_j \gets \frac{1}{2} ||\mathbf{x}_j - \D \a||_2^2 + \frac{\lambda}{s_j} ||\a||_1$
            \ENDFOR
        \UNTIL{convergence}
        \FOR{$j=1$ to $n$}
    		       		            \STATE update $s_j$ according to Equation \ref{eq:upds}
        \ENDFOR
    \ENDFOR
	\ENSURE $\D$, $s$
\end{algorithmic} \end{algorithm}

Based on the above majorizing linear function property of concave
functions and because  in our case $F'(u_0)$ can easily be
computed, we consider a majorization-minimization approach for
solving Problem (\ref{eq:rdl}).  
Our iterative algorithm consists, at iteration $\kappa$, in
approximating the concave loss function $F$ at the current solution
$\D_{\kappa}$ and $\A_{\kappa}$ and then solve the resulting
approximate problem for $\D$ and $\A$. This yields in solving:
\begin{equation}
  \label{eq:iteraterdl}
    \min_{\D,\A}  \frac{1}{2} \sum_i  s_i \| \x_i - \D \a_i\|_2^2 + \Omega_D(\D) +
\Omega_A(\A)
\end{equation}
where $s_i = [g \circ \sqrt{\bullet}~]^\prime$ at
$\D_{\kappa}$ and $\a_{\kappa,i}$. Since, we have
$$
[g \circ \sqrt{\bullet}~]^\prime(u_0) = \frac{1}{2 \sqrt{u_0}}g^{\prime}(
\sqrt{u_0}) 
$$
weights $s_i$ can be defined as 
\begin{equation}\label{eq:upds}
s_i = \frac{g'(\| \x_i -\D_{\kappa} \a_{\kappa,i}\|_2)}{2 \| \x_i
  -\D_{\kappa} \a_{\kappa,i}\|_2}.
\end{equation}

This definition of  $s_i$ can be nicely interpreted. Indeed, if
$g$ is so that  
$\frac{g'(u)}{u}$ becomes small as $u$ increases, examples with
large residual values $\| \x_i   -\D_{\kappa} \a_{\kappa,i}\|_2$ have
less importance in the learning problem (\ref{eq:iteraterdl}) because
their corresponding values $s_i$ are small.

Note how the composition $g \circ \sqrt{\bullet}$ allows us to write
the data fitting term with respect to the squared residual norm so that at
each iteration, the problem (\ref{eq:iteraterdl}) to solve
is simply a weighted
 smooth dictionary learning problem, convex in each of its parameters,
that can be addressed using off-the-shelf tools.
As
such, it can be solved alternatively for $\D$ with fixed $\A$ and then
for $\A$ with fixed $\D$. For fixed $\A$, the optimization problem is
thus:
\begin{equation}\label{eq:dicoupdate}
\min_\D \frac{1}{2} \sum_i \| \tilde \x_i - \D \tilde \a_i\|_2^2 + \Omega_D(\D)
\end{equation}
where $\tilde \x_i = \sqrt{s_i} \x_i$ and
$\tilde \a_i = \sqrt{s_i} \a_i$. This problem can be solved using a
proximal gradient algorithm or block-coordinate descent algorithm as
given in Mairal et al. \cite{Mairal:2009-online-dl}. For fixed $\D$,
the problem is separable in $\a_i$ and each sub-problem is equivalent
to a Lasso problem with regularization $\frac{\lambda}{s_i}$.

The above algorithm is generic in the sense that it is applicable to any continuous concave and non-decreasing function $g$, even non-smooth ones. This is in 
constrast with algorithms proposed in \cite{wang2013semi,jiang2015robust} which have been tailored to
some specific functions $g$. In addition, the convergence in objective value of the algorithm is guaranteed for any of these $g$ functions, by the fact that the objective value  in Equation \ref{eq:rdl} decreases at each iteration while it is obviously lower bounded.

\subsection{Heuristic for initialization}

The problem we are solving is a non-convex problem and its solution is
thus very sensitive to initialization. The presence of outliers in the
data matrix $\X$ magnifies this sensitivity and increases the need for
a proper initialization of $s_i$ in our iterative algorithm based
on Equation (\ref{eq:iteraterdl}). If we were  able to identify
outliers before learning, then we would assign $s_i=0$ to these
samples so that they become irrelevant for the dictionary learning
problem. However, detecting outliers in a set of samples is a
difficult learning problem by itself \cite{Chandola:2007-outlier}.

Our initialization heuristic is based on the intuition that if most
examples belong to a linear subspace of $\R^d$ while outliers leave
outside this subspace, then these outliers can be better identified by
using an undercomplete dictionary learning than an overcomplete
one. Indeed, if the sparsity penalty is weak enough, then an
overcomplete dictionary can approximate well any example leading to a large value of $s_i$ even the
for outliers.

Hence, if the number of dictionary to learn is larger than the
dimension of the problem, we propose to initialize $\D$ and $s$ by
learning mini-batches of size $b<d$ of dictionary atoms using one iteration of
Alg.~\ref{alg:proposed_robust_dl} initialized with
$s_i = 1, \forall i \in [1, \ldots, n]$, a random dictionary and random
weigths $\A$. If
we have only a small proportions of outliers, we make the hypothesis
that the learning problem will focus on dictionary atoms that span a
subspace that better approximates non-outlier samples. Then, 
as each set of learned mini-batch dictionary atoms leads to a different
error $\| \x_i - \D \a_i\|_2$ and thus to different $s_i$ as defined
in Equation $3$, we estimate $s_i$ as the average $s_i$ over the
number of mini-batch and we expect $s_i$ to be small if  $i$-th example
is an outlier.
This initialization strategy is presented in Alg.~\ref{alg:undercomplete_initialization}.

\begin{algorithm}[t]
\footnotesize \caption{Undercomplete initialization}
\label{alg:undercomplete_initialization}
\begin{algorithmic}[1] 	\REQUIRE Data matrix $\mathbf X$, dictionary $\D \in \R^{d \times k}$, with $d<k$,  number of atoms in each batch $b < k $, parameters $\lambda$ and $\epsilon$.
	
    	\STATE $N \gets \ceil[\big]{\frac{k}{b}}$   \COMMENT{number of batches}
        \STATE $s = 0$
        \STATE Initialize $\D = [\d_1, \ldots, \d_k]$ as a zero matrix
        \FOR{$i=0$ to $(N-1)$}
            \STATE I = indices related to $i$-th batch
            \STATE $\hat \D, \hat s \gets \textrm{Algorithm~\ref{alg:proposed_robust_dl}}\big( \X, |I|, \lambda, \epsilon, 1 \big)$
            \STATE $\D_I \gets \hat \D$ \COMMENT{assign learned dictionary to the appropriate indices}
            \STATE $s \gets s + \hat s$ \COMMENT{accumulate weights}
        \ENDFOR
        \STATE $s \gets \frac{s}{N}$ \COMMENT{compute average}
	\ENSURE $\D$, $s$
\end{algorithmic} \end{algorithm}

\section{Experiments}
\label{sec:experiments}

\subsection{Experiments on synthetic data}
\label{ssec:experiments_synthetic_data}

\begin{figure}[t]
	\centering
             ~\hfill
         \includegraphics[width=0.22\textwidth]{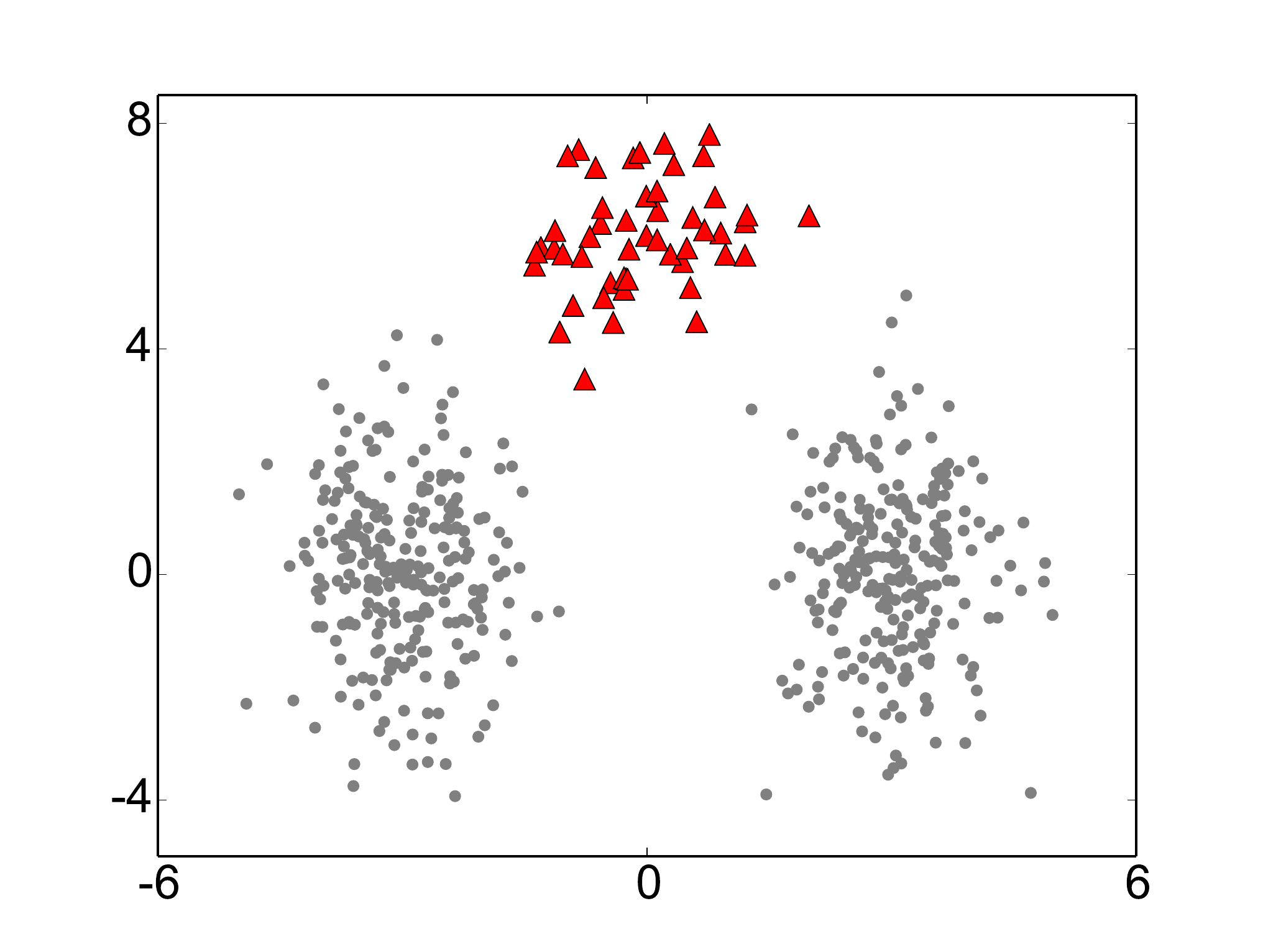}~\hfill
         \includegraphics[width=0.22\textwidth]{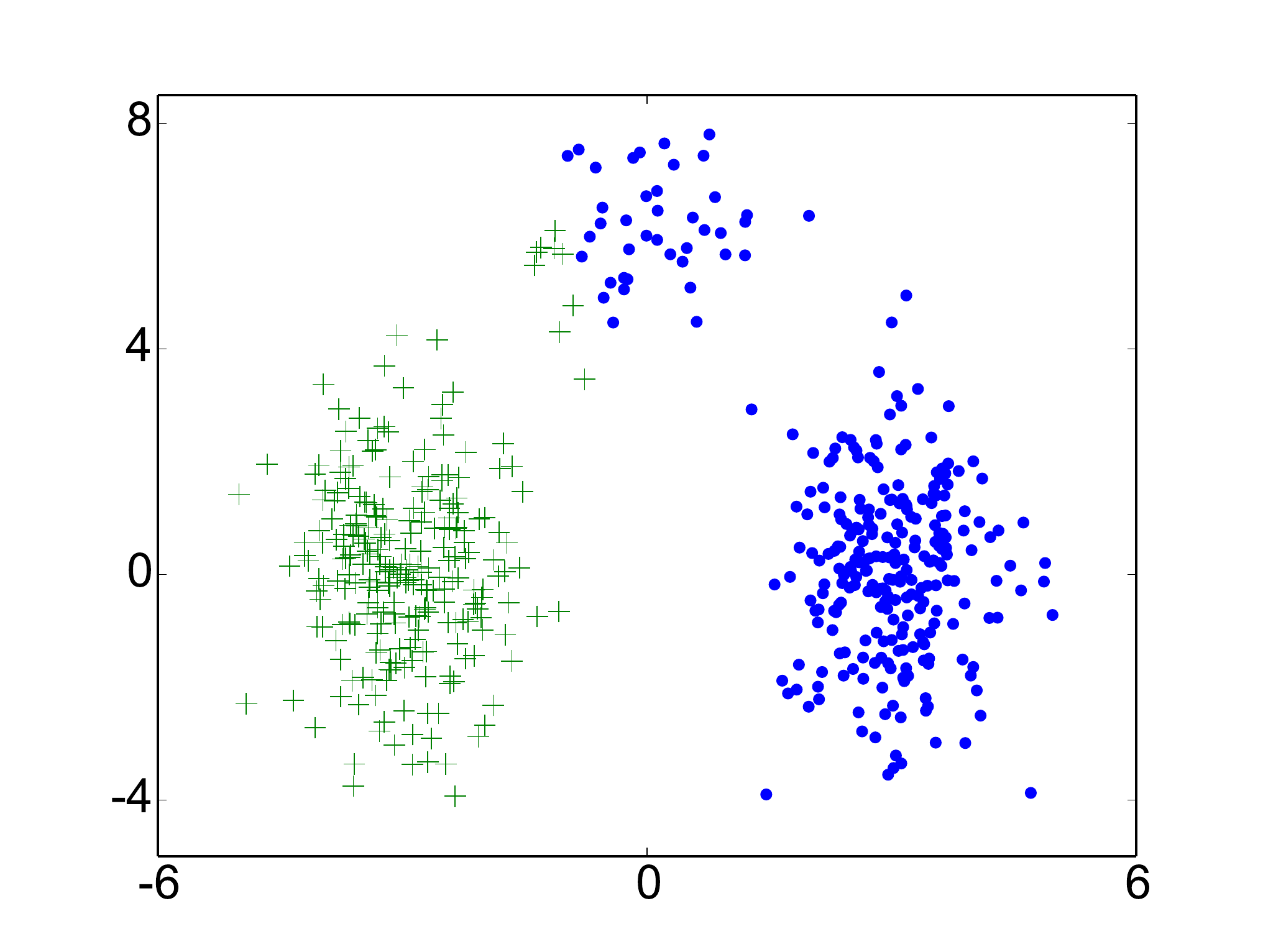}\hfill~\\
         ~\hfill
        \includegraphics[width=0.22\textwidth]{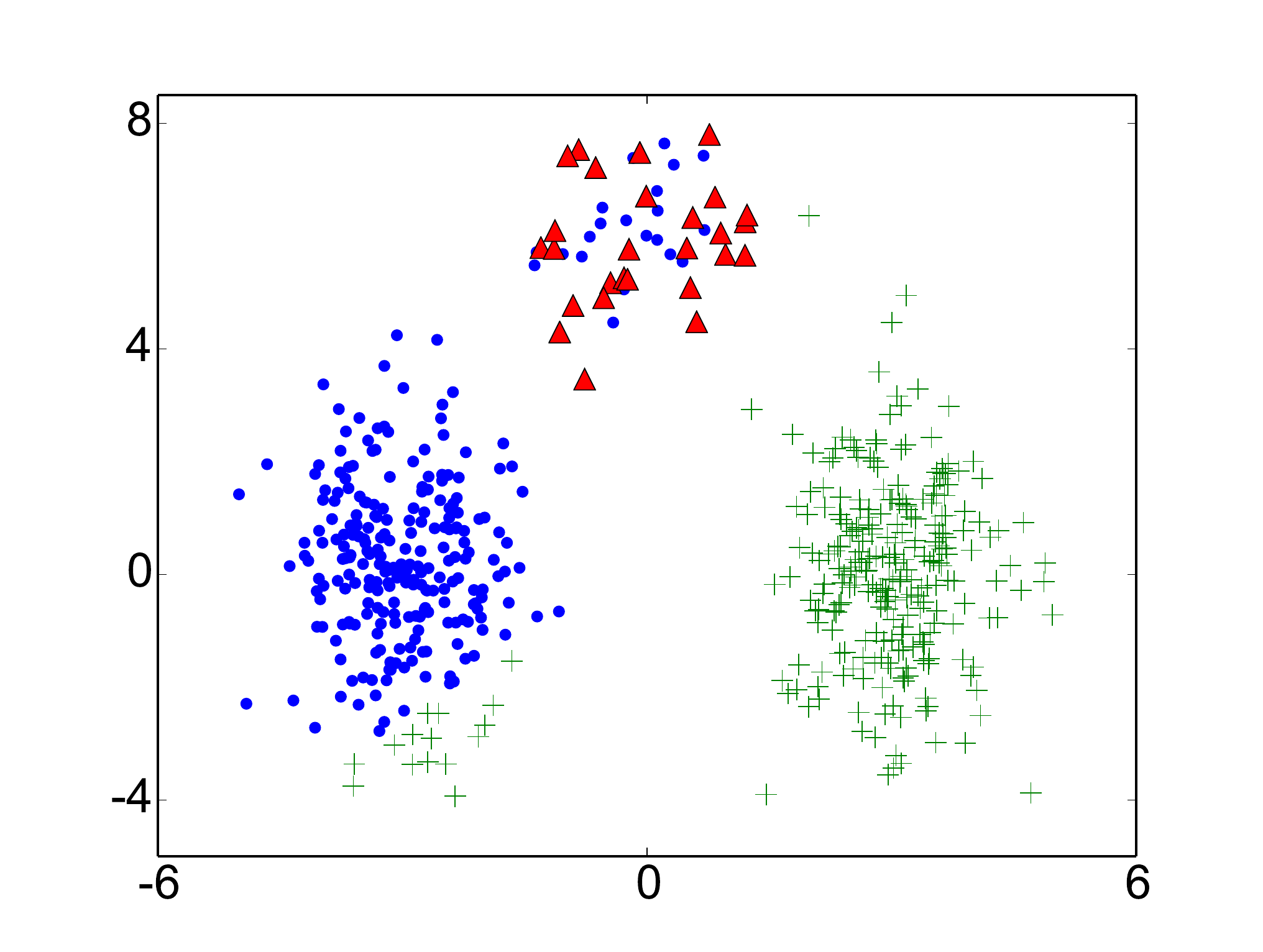}~\hfill~
        \includegraphics[width=0.22\textwidth]{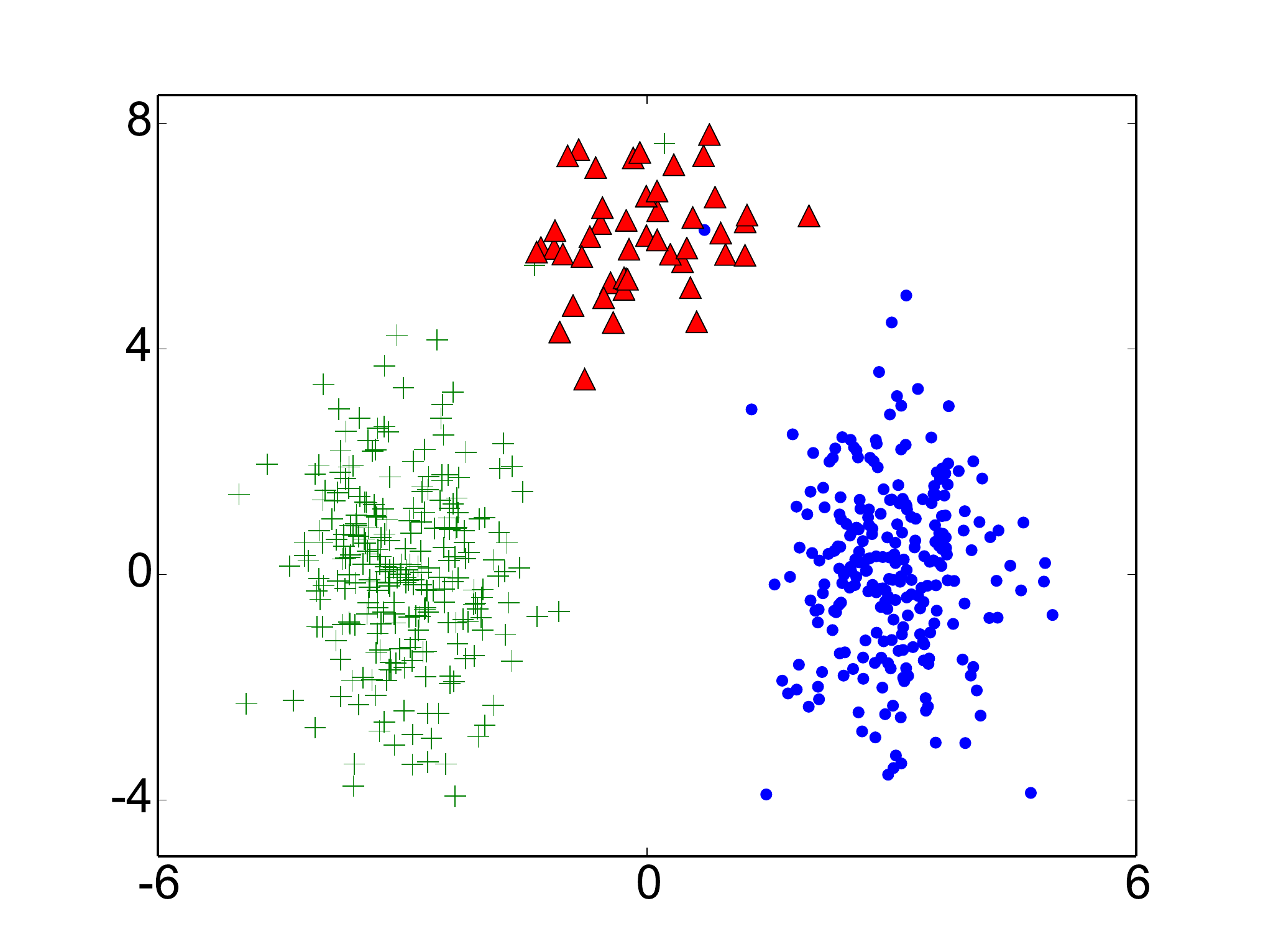}\hfill~\\
            \caption{Synthetic 2D data drawn from two Gaussian distributions. The outliers are represented as the red triangles.
            (top-left) original data with outliers (top-right) Clustering with K-SVD
            (bottom-left) Clustering with proposed method with $g(u)= u$. (bottom-right) Clustering with proposed method using the log function.}
    \label{fig:2d_experiments}
\end{figure}

\begin{figure*}[h]
	    \hspace{-4mm}
    \begin{minipage}{0.33\linewidth}
        \centering
        \includegraphics[width=1.0\textwidth]{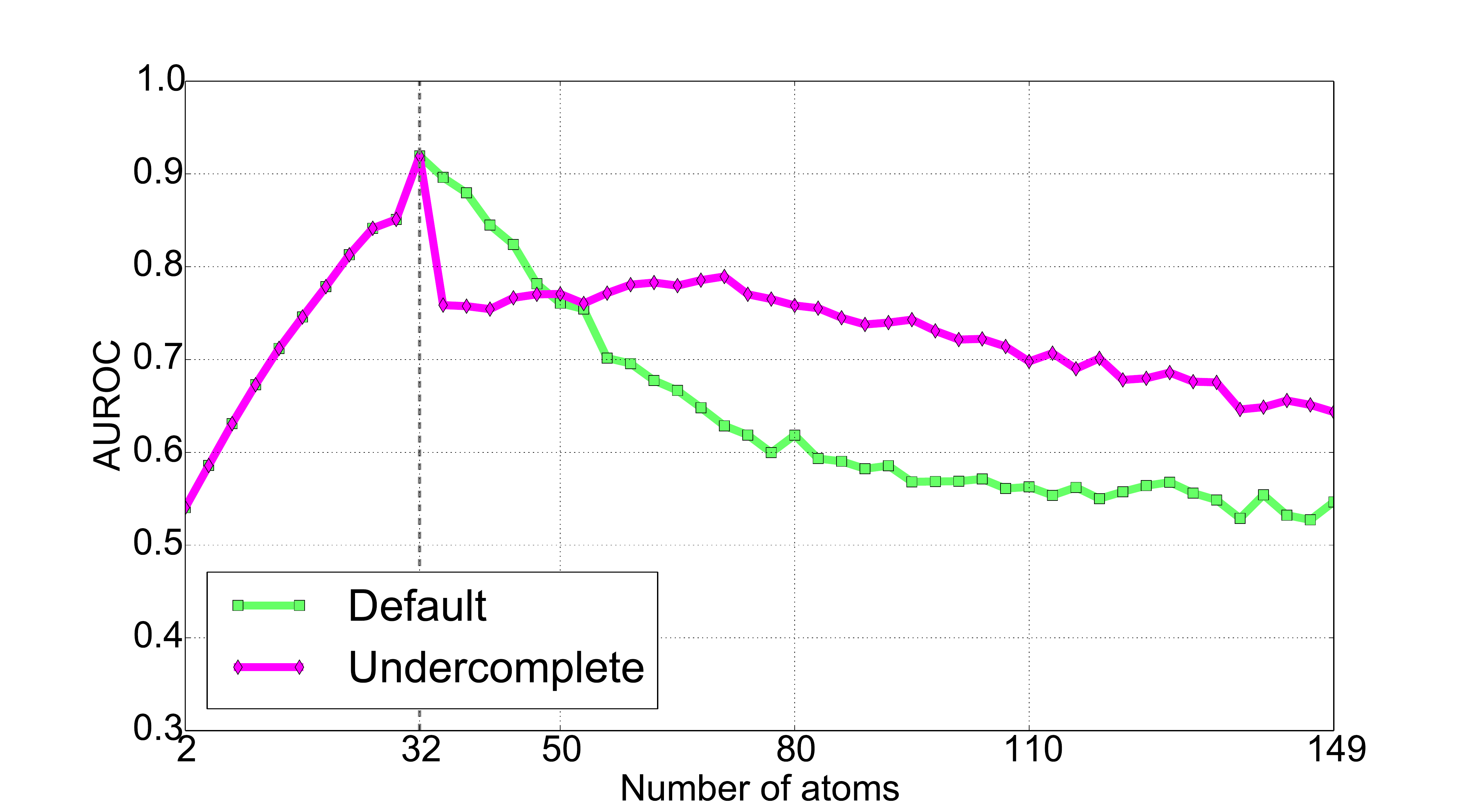}\\
        (a) Different dictionary sizes, with 1000 samples and 10\% are outliers.
    \end{minipage}
    \begin{minipage}{0.33\linewidth}
        \centering
        \includegraphics[width=1.0\textwidth]{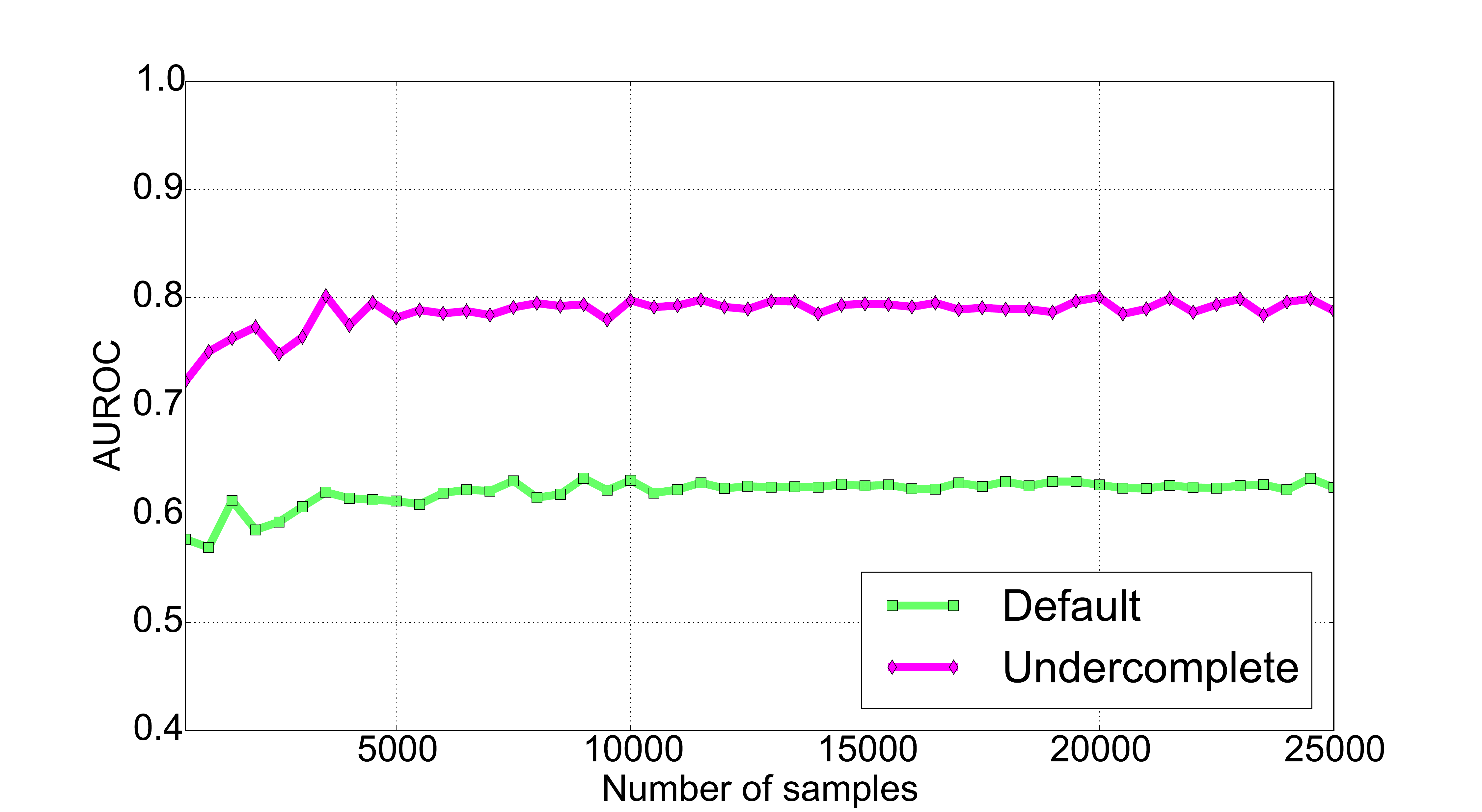}\\
        (b) Different number of samples, where 10\% are outliers.
    \end{minipage}
    \begin{minipage}{0.33\linewidth}
        \centering
        \includegraphics[width=1.0\textwidth]{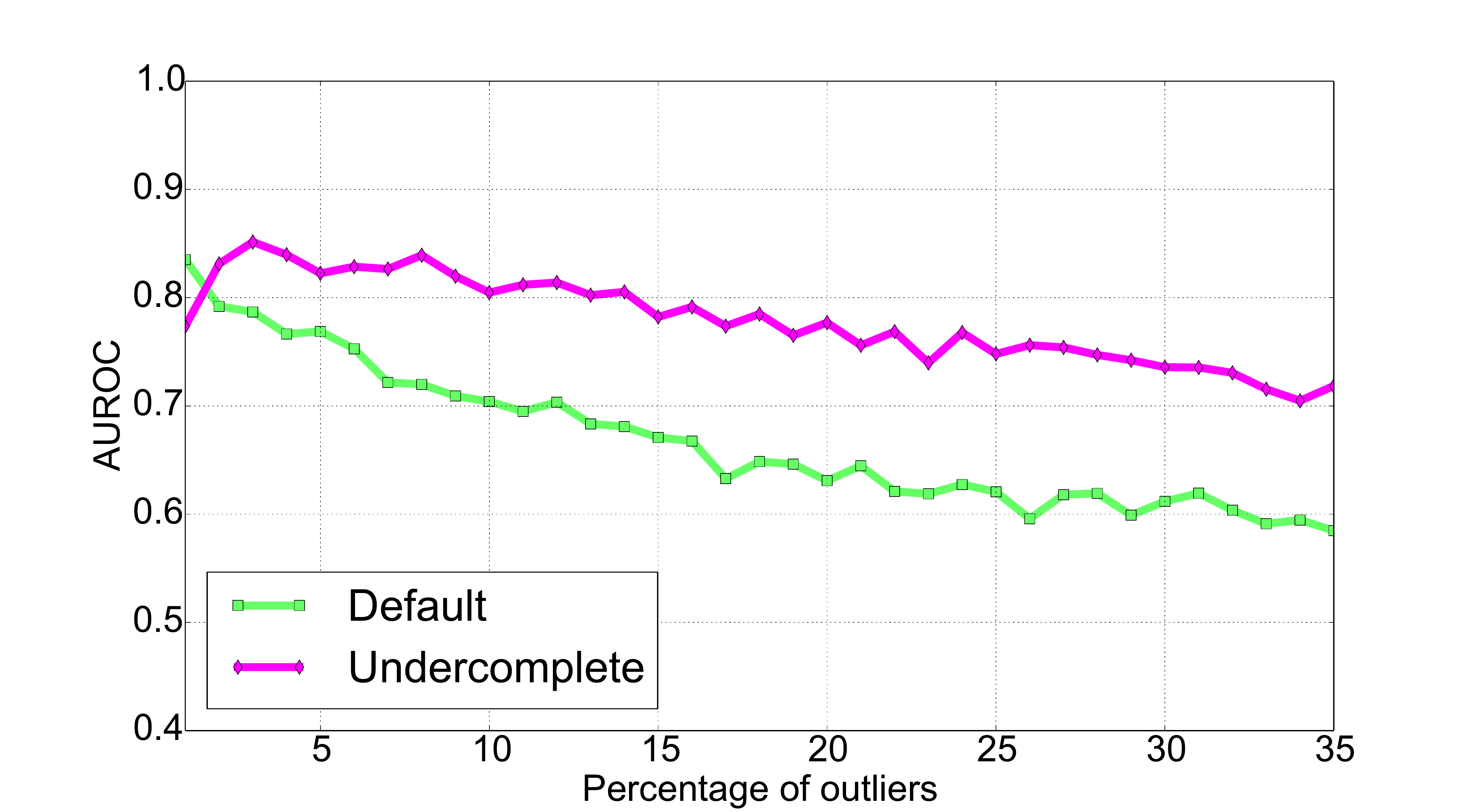}\\
        (c) Different outlier ratios (\%), with 1000 samples and 64 atoms.
    \end{minipage}
    \caption{Performance of the proposed method with multidimensional data.}
    \label{fig:multidimensional_experiments}
\end{figure*}

We use synthetic generated datasets with outliers to demonstrate that our method is robust against outliers. Figure~\ref{fig:2d_experiments} presents two clusters generated from two Gaussian distributions, each containing 250 points along with 50 outliers represented as the red triangles, far away from the clusters. Figure~\ref{fig:2d_experiments} also shows the clustering results using K-SVD \cite{Aharon:2006-ksvd} as well as the proposed method when $g(u) = u$ and the $log(\epsilon + u)$ functions, respectively. Then, we compare how many of the original outliers are among the 50 highest reconstruction values. The proposed method using the log function proved to be the most robust against outliers, with 47 from the 50 true outliers detected. It is followed by the variant with the identity function, which identified 27 outliers, and finally by K-SVD, which was naturally not able to identify any of the original outliers. This example also shows that concavity of $g$ helps in better identifying outliers.

To further evaluate our proposal, we performed experiments with higher dimensional data (fixed at 32 dimensions). To generate the data, we use the approach described by Lu et al. \cite{Lu:2013-online-dl} to create synthetic data based on a dictionary and sparse coefficients. The metric adopted to compare the results is the AUC Curve (AUROC) of outlier scores $\{s_i\}$ after running Alg.~\ref{alg:proposed_robust_dl}: outliers should have scores $1/s_i$ larger than non-outliers, and each point is the average of 5 runs using newly generated data. In Fig.~\ref{fig:multidimensional_experiments}a one can observe that the behavior for both lines is the same until the number of atoms reach 32, since $k \leq d$ and the condition in the first line of Alg.~\ref{alg:undercomplete_initialization} is not met. The performance of the undercomplete initialization method also deteriorates for dictionary sizes a little bit greater than $d$, but as far as $k$ starts to increase it becomes evident that this method outperforms the default initialization. Figure~\ref{fig:multidimensional_experiments}b shows that our method stays very stable independent of the number of samples, given a constant outlier ratio, regardless of the initialization method. Finally, Fig.~\ref{fig:multidimensional_experiments}c shows the behavior of both initialization strategies in scenarios where the outlier proportion changes. It can be noticed that the AUROC values decrease slowly as long as the number of outliers in the samples increase. This is natural since when proportion of outliers is large, outliers can hardly be considered outliers anymore.

\subsection{Human attribute classification}
\label{ssec:experiments_real_data}

In order to prove that our robust dictionary learning method is really beneficial to real data contexts, we also evaluate its performance on the MORPH-II dataset \cite{Ricanek:2006-morph}, one of the largest labeled face databases available for gender and ethnicity classification, containing more than 40,000 images. Before the training and classification take place, the images are preprocessed, which consists of face detection, align the image based on the eye centers, as well as cropping and resizing. Finally, they are converted to grayscale and SIFT \cite{Lowe:1999-sift} descriptors are computed from a dense grid.

The experiments are run with the proposed method using both the default and the undercomplete initialization approaches using the log function, and then compared with state-of-the-art methods such as K-SVD and LC-KSVD \cite{Jiang:2011-lcksvd}. The classifier uses a Bag of Visual Words (BoVW) approach \cite{Csurka:2004-bag} by replacing the original K-Means algorithm with each of those methods, and then generating a image signature (histogram of frequencies) using the computed clusters, which are later fed to a SVM. This SVM uses a RBF (Radial Basis Function) kernel with tuned $\gamma$ and $C$ parameters. The number of atoms is set to 200 for all experiments.

\begin{table}[h]
\centering
\scalebox{0.70}{
\begin{tabular}{|c||c|c|}
\hline
\multirow{2}{*}{\textit{Method}} & \textit{Ethnicity} & \textit{Gender} \\
 & \textit{accuracy} & \textit{accuracy} \\ \hline \hline
\textbf{Our RDL (default)}   & \textbf{96.28} & \textbf{84.76} \\
\hline
\textbf{Our RDL (undercomplete)}   & \textbf{96.90} & \textbf{85.79} \\
\hline
K-SVD   & 96.23 & 81.88 \\
\hline
LC-KSVD1   & 96.24 & 83.91 \\
\hline
LC-KSVD2   & 95.69 & 84.69 \\
\hline
\end{tabular}
} \caption{Average classification accuracies (\%) for ethnicity and gender classification on the MORPH-II dataset.}
\label{tab:results_ethnicity_gender}
\end{table}

Each experiment is the average of 3 runs, each one using 300 selected images per class for training, and the remaining images for classification. The total number of images per class is as follows: 32,874 Africans plus 7,942 Caucasians for ethnicity classification, and 6,799 Females plus 34,017 Males for gender classification. Table~\ref{tab:results_ethnicity_gender} shows the overall accuracies. These experiments clearly demonstrate that the quality of the dictionaries computed by the proposed robust dictionary learning method is indeed superior
even to methods that uses labels for dictionary learning  \cite{Jiang:2011-lcksvd}.

\section{Conclusions}
\label{sec:conclusions}

In this work, we proposed a generic dictionary learning framework which takes advantage of a composition of two concave functions to generate robust dictionaries with very little outlier interference. We also came up with a heuristic initialization which can further increase the identification of outliers, through the use of undercomplete dictionaries. Experiments on synthetic and real world datasets show that the proposed methods outperform some of the state-of-the-art methods such as K-SVD and LC-KSVD, since our approaches are able to achieve higher quality dictionaries which better generalize data.

\vfill\pagebreak

\bibliographystyle{IEEEbib}

\end{document}